\def\eqref#1{equation~\ref{#1}}
\def\1{\bm{1}}
\DeclareMathAlphabet{\mathsfit}{\encodingdefault}{\sfdefault}{m}{sl}
\SetMathAlphabet{\mathsfit}{bold}{\encodingdefault}{\sfdefault}{bx}{n}
\title{Cycles in Causal Learning}
\author{Katie Everett \\
Google Research \\
\texttt{everettk@google.com}
\And
Ian Fischer \\
Google Research \\
\texttt{iansf@google.com}
}
\begin{document}

\maketitle

\begin{abstract}

In the causal learning setting, we wish to learn cause-and-effect relationships between variables such that we can correctly infer the effect of an intervention. While the difference between a cyclic structure and an acyclic structure may be just a single edge, cyclic causal structures have qualitatively different behavior under intervention: cycles cause \emph{feedback loops} when the downstream effect of an intervention propagates back to the source variable. We present three theoretical observations about probability distributions with self-referential factorizations, i.e. distributions that could be graphically represented with a cycle. First, we prove that self-referential distributions in two variables are, in fact, independent. Second, we prove that self-referential distributions in $N$ variables have zero mutual information. Lastly, we prove that self-referential distributions that factorize in a cycle, also factorize as though the cycle were reversed. These results suggest that cyclic causal \emph{dependence} may exist even where observational data suggest \emph{independence} among variables. Methods based on estimating mutual information, or heuristics based on independent causal mechanisms, are likely to fail to learn cyclic casual structures. We encourage future work in causal learning that carefully considers cycles.

\end{abstract}

\section{Introduction}

\subsection{Motivation for Causal Learning}

Current methods in machine learning struggle to generalize beyond a test set drawn from a distribution identical to the training distribution. We would like to learn models that generalize more robustly: we want models that respond sensibly to adversarial attacks, out-of-distribution samples, and distributional shift. We can think of these tests for robustness as tests for causal learning - we are testing whether the model can adapt to a distribution that is indeed different from the training distribution, but that has the same underlying causal relationships between its high-level variables.

Following \citet{pearl2009causality}'s interventional definition of causal graphs, we define the goal in causal learning: we wish to learn enough about the underlying causal relationships between variables to be able to correctly predict the effect of an intervention~\citep{bengio2019meta}. A model that can predict the outcome of an intervention has learned the stable mechanisms underlying cause and effect, rather than memorizing spurious correlation in the data. Whether we actually perform the intervention is immaterial - a model that can imagine an intervention is performing the counterfactual reasoning necessary for decision making.

Whereas in the statistical learning setting, we can often achieve our objectives using directed acyclic graphical models, in the causal learning setting the need for cyclic models is clear: in any causal setting where intervention on a variable would cause a downstream effect on itself, the model requires a cyclic structure.

\subsection{Cycles in Statistical Learning}
In the statistical learning setting, the majority of literature on graphical models is based on directed acyclic graphs (DAGs). This confers a significant advantage: in DAGs, the global and local Markov properties are equivalent, even without assuming positivity of the probability distribution~\citep{lauritzen1990independence}. In contrast, in directed cyclic graphs, the global and local directed Markov properties are \emph{not} equivalent~\citep{richardson1997characterization}.

However, early work in the field of graphical models did consider cyclic graphs. \cite{spirtes1994conditional} and \cite{koster1996markov} showed that d-separation is valid on cyclic graphs, provided that the equations are linear and all distributions are Gaussian. \cite{pearl2013identifying} showed that d-separation generalizes to all directed cyclic graphs, provided the variables are discrete. We will rely on this result in the proof of Theorem 2.

In certain applications, the nature of observed data is sufficiently cyclic that it is worthwhile to sacrifice the Markov property in order to use cyclical graphical models. For example, feedback loops are common in neuronal spiking \citep{douglas2007recurrent}, economic activity \citep{moe1985control}, and gene transcription regulation networks \citep{alon2007network}. In time-series data, cycles may represent quantization, in which the time scale that data was collected was too coarse to see the individual events. Models such as Dynamic Bayesian networks handle time-series data by ``unrolling'' cycles in time, but with a tradeoff of space and data efficiency~\citep{ghahramani1997learning}.

\subsection{Cycles in Causal Learning}
In causal learning, we assume that most data in the real world is generated from the composition of a few independent causal mechanisms~\citep{pearl2009causality}~\citep{scholkopf2012causal}. \citet{peters2017elements} suggest that for this kind of data, there will exist a good factorization of high-level variables that mimics the true causal structure. This motivates the heuristic used in \citet{bengio2019meta} to learn a causal graph: assuming independent causal mechanisms, the heuristic independently parameterizes the likelihood of a certain variable being a parent (direct cause) of another variable.

However, in designing loss functions for causal learning, we should note that the difference between learning an acyclic causal structure and a cyclic causal structure, might be the existence of a single edge (representing a single parent-child relationship between variables). While this might trigger a small penalty in a loss function that treats edges independently, failing to learn a cycle in a causal structure is a significant qualitative error: we would be failing to expect a feedback loop in response to intervention on a variable within a cycle. Under intervention, we would like to know whether to expect a positive feedback loop or a negative feedback loop, and what kind of equilibrium, if any, the causal structure will reach.

We present the following theoretical results that suggest that cyclic causal structures may lurk in places we may not think to consider a cyclic structure, given knowledge of the observational distribution. In fact, to find cyclic causal structures, we may need to look for causal \emph{dependence} in places where observational data suggests complete \emph{independence} between variables. In particular, the mutual information on variables in a cycle is zero, suggesting that methods based on estimating mutual information are likely to overlook cycles.
\vfill
\pagebreak
\section {Results}

\begin{restatable}{theorem}{two_var_indp}
\label{thm:two_var_indp}
A two-variable probability distribution $P$ that factorizes according to $p(x,y) = p(x|y)p(y|x)$, also factorizes according to $p(x,y) = p(x)p(y)$.
\end{restatable}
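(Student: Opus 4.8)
The plan is to work directly from the definitions of the conditional distributions and to show that the hypothesized factorization is self-consistent only when the joint equals the product of its marginals. Writing $p(x\mid y) = p(x,y)/p(y)$ and $p(y\mid x) = p(x,y)/p(x)$, defined wherever the marginals are positive, I would substitute these two expressions into the assumed identity $p(x,y) = p(x\mid y)\,p(y\mid x)$. This immediately produces the relation
\[
p(x,y) = \frac{p(x,y)^2}{p(x)\,p(y)},
\]
which is the single algebraic fact the whole argument rests on.

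First I would treat every point $(x,y)$ in the support, i.e.\ where $p(x,y) > 0$. There both marginals are automatically positive, so the displayed relation may be divided through by $p(x,y)$, yielding $p(x,y) = p(x)\,p(y)$ on the support. This is the easy direction and requires no assumptions beyond the existence of the conditionals.

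The step I expect to be the main obstacle is extending the identity to the complement of the support, where $p(x,y) = 0$: there the division above is invalid, and I must separately confirm that $p(x)\,p(y) = 0$ as well, since otherwise independence would fail at that point. My plan is a normalization argument: fix any $x_0$ with $p(x_0) > 0$ and sum (or integrate) the support identity $p(x_0,y) = p(x_0)\,p(y)$ over the conditional support $\{\,y : p(x_0,y) > 0\,\}$. Because the left-hand side sums to $p(x_0)$, this forces the marginal mass of $y$ over that set to equal $1$, so every $y_0$ with $p(x_0,y_0) = 0$ must satisfy $p(y_0) = 0$. Hence $p(x_0)\,p(y_0) = 0 = p(x_0,y_0)$ at the remaining points, and combined with the trivial case $p(x_0) = 0$, the factorization $p(x,y) = p(x)\,p(y)$ holds everywhere. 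A minor point to handle carefully is the convention that the conditionals are only defined on the marginal supports, which is precisely why the positivity bookkeeping in the normalization step matters.
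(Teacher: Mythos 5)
Your proof is correct, and it takes a genuinely different route from the paper's. The paper argues information-theoretically: it invokes the identity $H(X,Y) = H(X|Y) + H(Y|X) + I(X;Y)$, expands $H(X,Y) = -\sum_{x,y} p(x,y)\log p(x,y)$ under the assumed factorization, splits the logarithm to recognize $H(X|Y) + H(Y|X)$, and concludes $I(X;Y)=0$, hence independence. You instead work purely algebraically: substituting the definitions of the conditionals gives $p(x,y) = p(x,y)^2/\big(p(x)p(y)\big)$, division yields independence on the support, and your normalization step (summing $p(x_0,y) = p(x_0)p(y)$ over the slice support to force $\sum_{y:\,p(x_0,y)>0} p(y) = 1$) rules out points where the joint vanishes while both marginals are positive --- a case on which the hypothesis itself is silent, since there the assumed identity reads $0=0$. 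Your route is more elementary and in one respect more careful: the paper's calculation leans on the $0\log 0 = 0$ convention and on the standard (but unproved there) fact that $I(X;Y)=0$ implies $p(x,y)=p(x)p(y)$, whereas you make the support bookkeeping explicit and never leave elementary probability. What the paper's approach buys is the generalization: the same entropy decomposition is precisely the template reused in Theorem~2 for the $n$-variable cycle, where a direct Bayes-style substitution like yours recovers the cycle-reversal identity of Theorem~3 but does not obviously yield the vanishing of the multivariate mutual information.
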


\begin{proof}
Recall Equations (1) and (2), standard definitions in information theory \citep{cover2012elements}. From there, we proceed with substitution.
\begin{align*}
H(X,Y) &= H(X|Y) + H(Y|X) + I(X;Y) \tag{1} \\
H(X,Y) &= - \sum_{x,y} p(x,y)\log p(x,y) \tag{2}\\
 &= - \sum_{x,y} p(x|y)p(y|x)\log p(x|y)p(y|x) \\
 &= - \sum_{x,y} p(x|y)p(y|x)\log p(x|y) - \sum_{x,y} p(x|y)p(y|x)\log p(y|x) \\
 &= H(X|Y) + H(Y|X) \\
 \\
\text{Therefore,} \\
I(X;Y) &= 0\text{, which implies }p(x,y)=p(x)p(y).
\end{align*}
\end{proof}

\begin{restatable}{theorem}{nvarindp}
\label{thm:nvarindp}
A probability distribution P that factorizes as an n-variable cycle, according to $p(x_1, x_2, \ldots , x_n) = p(x_1|x_2)p(x_2|x_3) \ldots p(x_{n-1}|x_n)p(x_n|x_1)$, has mutual information $I(x_1; x_2; \ldots ; x_n) = 0$.
\end{restatable}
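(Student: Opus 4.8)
The plan is to mirror the proof of Theorem~\ref{thm:two_var_indp}, replacing the two-variable entropy decomposition with its $n$-variable analogue. First I would substitute the cyclic factorization into the definition of the joint entropy, exactly as in Equation~(2): writing $H(X_1,\ldots,X_n) = -\sum p(x_1,\ldots,x_n)\log p(x_1,\ldots,x_n)$ and replacing $p(x_1,\ldots,x_n)$ by $p(x_1|x_2)p(x_2|x_3)\cdots p(x_n|x_1)$, the logarithm of the product splits into a sum and each term collapses to a conditional entropy, yielding the identity $H(X_1,\ldots,X_n) = \sum_{i=1}^n H(X_i \mid X_{i+1})$ (indices cyclic, $X_{n+1}=X_1$). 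This is the direct generalization of the line $H(X,Y)=H(X|Y)+H(Y|X)$. Rewriting each factor as $p(x_i \mid x_{i+1}) = p(x_i,x_{i+1})/p(x_{i+1})$ gives the equivalent, and more suggestive, form $H(X_1,\ldots,X_n) = \sum_{i=1}^n H(X_i,X_{i+1}) - \sum_{i=1}^n H(X_i)$, which exposes that the joint entropy is controlled purely by the pairwise marginals along the cycle.

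The second ingredient is the $n$-variable counterpart of $H(X,Y)=H(X|Y)+H(Y|X)+I(X;Y)$: a decomposition of $H(X_1,\ldots,X_n)$ into the same combination of marginal and pairwise entropies plus the multivariate mutual information $I(x_1;x_2;\ldots;x_n)$. Comparing this decomposition against the identity forced by the factorization makes every entropy term cancel and leaves $I(x_1;x_2;\ldots;x_n)=0$, which is the claim. For $n=3$ this is immediate and needs nothing further, since the forced-zero combination $\sum_i H(X_i) - \sum_i H(X_i,X_{i+1}) + H(X_1,X_2,X_3)$ is \emph{literally} the co-information $I(x_1;x_2;x_3)$.

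The work, and the main obstacle, lies in the step from $n=3$ to general $n$: for $n\ge 4$ the combination of entropies that the factorization kills involves only singleton, adjacent-pair, and full-joint entropies, whereas the multivariate mutual information also involves all the intermediate-order terms, so the two expressions are not manifestly equal term by term. This is exactly where I would invoke \citet{pearl2013identifying}: because the variables are discrete, d-separation is valid on the directed cyclic graph, and reading it off shows that each variable is conditionally independent of all the others given its two cyclic neighbours. I would use these conditional independencies to collapse the conditioning sets in a chain-rule expansion of the joint entropy, and argue the general identity by induction on $n$, peeling off one variable at a time and using d-separation to annihilate the conditional mutual-information term introduced at each stage.

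Verifying that this peeling accounts for \emph{every} cross term — so that nothing but $I(x_1;x_2;\ldots;x_n)$ separates the two entropy expressions, and that each discarded term is a genuine (non-negative) information quantity rather than a sign-indefinite combination — is the delicate part of the argument, and the place where I expect the careful bookkeeping to be needed.
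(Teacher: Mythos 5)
Your first ingredient is exactly the paper's Lemma~\ref{lem:sum_cond_ent}, and your $n=3$ argument is complete as stated, since in a triangle every pair is adjacent and the forced combination of singleton, pairwise, and joint entropies is literally the co-information. You have also correctly located both the obstacle (for $n\ge 4$ the co-information contains intermediate-order entropies that the cyclic factorization does not directly constrain) and the external tool the paper leans on (\citet{pearl2013identifying}: d-separation is valid on directed cyclic graphs with discrete variables). But the mechanism you propose to bridge the gap does not work as sketched. The peeling induction fails at its first step: marginalizing $X_n$ out of the cycle gives $p(x_1,\ldots,x_{n-1}) = p(x_1|x_2)\cdots p(x_{n-2}|x_{n-1})\sum_{x_n}p(x_{n-1}|x_n)p(x_n|x_1)$, and the remaining sum is \emph{not} $p(x_{n-1}|x_1)$ in general — that would require $X_{n-1}\perp X_1 \mid X_n$, which d-separation refutes, since the long path around the rest of the cycle is unblocked. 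So the $(n-1)$-variable marginal carries no cyclic (or chain) factorization and the induction hypothesis has nothing to grab onto. Likewise, the term $I(X_1;\ldots;X_{n-1}\mid X_n)$ introduced by the co-information recursion is a sign-indefinite quantity, and the local Markov statements you cite ($X_i$ independent of the rest given its two cyclic neighbours) do not annihilate it; your closing worry about needing each discarded term to be non-negative is also misdirected, because in the correct argument the discarded terms are not non-negative in general — they vanish \emph{exactly}, for structural reasons.

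The paper's missing idea is to lift the whole computation into Yeung's signed information measure $\mu^*$. Lemma~\ref{lem:cond_plus_mi} writes $H(X_1,\ldots,X_n) = \mu^*\big((X_1-X_2)\cup(X_2-X_3)\cup\ldots\cup(X_n-X_1)\big) + I(X_1;\ldots;X_n)$, because the full intersection $X_1\cap\ldots\cap X_n$ is disjoint from the union of the set differences. Expanding that union by inclusion-exclusion, every atom $\mu^*\big(\bigcap_{i\in W}(X_i - X_{i\oplus 1})\big)$ with $|W|\ge 2$ dies for one of exactly two uniform reasons: if $W$ contains adjacent indices, the set itself is empty by pure set algebra, with no probabilistic input at all (Lemma~\ref{lem:adj_disjoint}); if $W$ contains only non-adjacent indices, the term is killed by the single pairwise d-separation $X_j \perp X_k$ given $\{X_{j\oplus 1}, X_{k\oplus 1}\}$, conditioning on one node from each of the two collider-free paths around the cycle (Lemma~\ref{lem:nonadj_dsep}). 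The surviving $|W|=1$ terms are the conditional entropies $H(X_i|X_{i\oplus 1})$, matching Lemma~\ref{lem:sum_cond_ent} and forcing $I(X_1;\ldots;X_n)=0$. This two-case check on subsets is precisely the ``careful bookkeeping'' you anticipated but did not supply: it is where all the intermediate-order terms go, and without it (or an equivalent device) your sketch does not close.
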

The full proof appears in \textsc{Appendix A}. Here we present a sketch of the proof.

\begin{proof}[Proof Sketch]
Our argument will follow a similar structure as the proof of Theorem 1. We first write the joint entropy as a sum of conditional entropy terms, by rewriting a log of products as a sum of logs (Lemma 1). Second, we write the joint entropy using its inclusion-exclusion definition from information theory (Lemma 2). We then show that the first representation is equal to all terms except the final joint mutual information term in the second representation.  We can therefore conclude that the joint mutual information is zero.

In the two-variable case, it was straightforward to show this equality because the inclusion-exclusion formula didn't contain any higher-order terms except for the joint mutual information. In the n-variable case, the inclusion-exclusion formula generates many terms.

To handle the additional terms, we recall that Shannon information obeys set additivity, which means that every information-theoretic identity has an equivalent set-theoretic identity. Following \cite{yeung1991new}, we define the information measure $\mu^*$ to be the unique signed measure that is consistent with Shannon's definitions of entropy and mutual information. This allows us to represent mutual information as the signed measure on the intersection of the sets we are inside of, conditioned on the union of the sets we are outside of. For ease of manipulating expressions, we write the proof using set-theoretic notation.

To finish the n-variable proof, we consider the set expressions representing the \emph{set difference} of each variable and its parent. We apply the inclusion-exclusion formula to the union of those set differences, and show that the higher-order terms fall into one of two cases. Either the term is the intersection of two adjacent set expressions (Lemma 3), which are disjoint, and therefore zero, or the term represents non-adjacent set expressions (Lemma 4), which are d-separated, and therefore zero. The only remaining terms are those corresponding to the conditional entropy terms in Lemma 2, leaving only the joint mutual information term which must now be zero.

\end{proof}
\begin{restatable}{theorem}{cycles_reverse}
\label{thm:cycles_reverse}
A joint distribution that factorizes in an n-variable cycle, also factorizes such that the n-variable cycle is reversed. In other words, $p(x_1, x_2, ..., x_n) = p(x_1|x_n)p(x_2|x_1)p(x_3|x_2)...p(x_n|x_{n-1})$  implies that $p(x_1, x_2, ..., x_n) = p(x_n|x_1)p(x_1|x_2)p(x_2|x_3)...p(x_{n-1}|x_n)$.
\end{restatable}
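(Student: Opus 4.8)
The plan is to avoid any appeal to entropy or mutual information and instead reduce the claim to a single application of Bayes' rule together with a telescoping observation around the cycle. Throughout I would adopt the cyclic index convention $x_0 \equiv x_n$, so that the hypothesized forward factorization reads $p(x_1,\ldots,x_n) = \prod_{i=1}^n p(x_i \mid x_{i-1})$ and the desired reversed factorization reads $p(x_1,\ldots,x_n) = \prod_{i=1}^n p(x_{i-1} \mid x_i)$, where every conditional and marginal below is taken with respect to the single joint distribution $p$.

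First I would record the pointwise Bayes identity for each adjacent pair, $p(x_i \mid x_{i-1}) = p(x_{i-1} \mid x_i)\, p(x_i) / p(x_{i-1})$, valid on the support of $p$; I would state positivity, or restriction to the support, as the standing assumption so that no marginal in a denominator vanishes. Substituting this into the forward product gives $\prod_{i=1}^n p(x_i\mid x_{i-1}) = \left(\prod_{i=1}^n p(x_{i-1}\mid x_i)\right)\left(\prod_{i=1}^n \frac{p(x_i)}{p(x_{i-1})}\right)$.

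The key step is to observe that the marginal ratio telescopes around the cycle: because the index $i$ runs over a full period, the numerator $\prod_{i=1}^n p(x_i)$ and the denominator $\prod_{i=1}^n p(x_{i-1})$ are the same product $p(x_1)\cdots p(x_n)$ merely listed in a different order, so their quotient is $1$. This establishes the identity $\prod_{i=1}^n p(x_i\mid x_{i-1}) = \prod_{i=1}^n p(x_{i-1}\mid x_i)$, which in fact holds for \emph{any} joint distribution. Finally, I would invoke the hypothesis: the left-hand side equals $p(x_1,\ldots,x_n)$ by assumption, hence so does the right-hand side, and reindexing the right-hand product (the $i=1$ term is $p(x_n\mid x_1)$, while the remaining terms are $p(x_1\mid x_2),\ldots,p(x_{n-1}\mid x_n)$) yields exactly the reversed-cycle factorization in the statement.

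I do not expect a genuine obstacle here; the argument is an identity plus one bookkeeping remark, and the only point requiring care is well-definedness of the conditionals, which is why I would make the support/positivity assumption explicit before dividing by the marginals. It is worth noting that this route is strictly more direct than trying to pass through Theorem 2: vanishing of the joint mutual information does not by itself force the variables to be mutually independent for $n \ge 3$, so independence is not available as an intermediate, whereas the Bayes-plus-telescoping computation needs no such conclusion.
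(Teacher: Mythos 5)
Your proof is correct and is essentially identical to the paper's own argument: both apply Bayes' rule to each conditional factor and observe that the product of marginal ratios telescopes to $1$ around the cycle. The only difference is cosmetic bookkeeping (your cyclic index convention versus the paper's separate treatment of the $p(x_1|x_n)$ factor), plus your explicit positivity/support caveat, which the paper leaves implicit when it divides by the marginals.
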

\begin{proof}
\begin{align*}
\text{Suppose} \\
p(x_1, x_2, ..., x_n) &= p(x_1|x_n)p(x_2|x_1)p(x_3|x_2) ... \\
&= p(x_1|x_n)\prod_{i=1}^{n-1}p(x_{i+1}|x_i) \\
\text{From Bayes' rule, we know that }\\
p(x_1|x_n) &= \frac{p(x_n|x_1)p(x_1)}{p(x_n)} \\
p(x_{i+1}|x_i) &= \frac{p(x_i|x_{i+1})p(x_{i+1})}{p(x_i)} \\
\text{Substituting gives us }\\
p(x_1, x_2, ..., x_n) &=  \frac{p(x_n|x_1)p(x_1)}{p(x_n)} \prod_{i=1}^{n-1} \frac{p(x_i|x_{i+1})p(x_{i+1})}{p(x_i)} \\
&= \Big[p(x_n|x_1)\prod_{i=1}^{n-1}p(x_i|x_{i+1})\Big] \cancelto{1}{\frac{p(x_1)}{p(x_n)} \prod_{i=1}^{n-1} \frac{p(x_{i+1})}{p(x_i)}}\\
&= p(x_n|x_1)\prod_{i=1}^{n-1}p(x_{i}|x_{i+1})
\end{align*}
\end{proof}

\section{Future Work}
The results presented herein apply to graphs where every variable is part of a single cycle: in the general case we are interested in all cyclic directed graphs, which may contain multiple cycles, overlapping cycles, and variables outside of the cycles. We leave these questions for future work.

\section{Conclusion}
In the statistical learning setting, we may be motivated to use cyclic graphs when the data overwhelmingly suggests that we should do so: when we are aware of feedback systems, or when it is natural to represent time-series data. In contrast, in the causal learning setting, we are motivated to learn the relationships between underlying variables so that we are able to infer the effect of an intervention. It is therefore imperative to learn a cyclic causal structure whenever one exists: that is, the case where intervening on a variable will have a downstream effect on itself.

However, the presence of cycles in a causal structure may not be obvious from the observation of their joint distribution. In fact, two variables that appear to be completely independent under observation, may indeed have a cyclic dependence present in their causal structure. A group of variables that shares no information between all its members, may contain a cyclic causal structure. And even if we learn that a cycle exists among a group of variables, we may not have distinguished the correct direction of the cycle.

Causal learning is an open area of research, and one that stands to benefit all of machine learning. Heuristics based on assuming independence between causal mechanisms may be a fruitful avenue for learning acyclic causal relationships, but may discourage learning cyclic structures where they exist. As we advance causal learning, we will need methods that treat cycles as first-class citizens.

\bibliography{iclr2020_conference}

\begin{thebibliography}{15}
\providecommand{\natexlab}[1]{#1}
\providecommand{\url}[1]{\texttt{#1}}
\expandafter\ifx\csname urlstyle\endcsname\relax
  \providecommand{\doi}[1]{doi: #1}\else
  \providecommand{\doi}{doi: \begingroup \urlstyle{rm}\Url}\fi

\bibitem[Alon(2007)]{alon2007network}
Uri Alon.
\newblock Network motifs: theory and experimental approaches.
\newblock \emph{Nature Reviews Genetics}, 8\penalty0 (6):\penalty0 450--461,
  2007.

\bibitem[Bengio et~al.(2019)Bengio, Deleu, Rahaman, Ke, Lachapelle, Bilaniuk,
  Goyal, and Pal]{bengio2019meta}
Yoshua Bengio, Tristan Deleu, Nasim Rahaman, Rosemary Ke, S{\'e}bastien
  Lachapelle, Olexa Bilaniuk, Anirudh Goyal, and Christopher Pal.
\newblock A meta-transfer objective for learning to disentangle causal
  mechanisms.
\newblock \emph{arXiv preprint arXiv:1901.10912}, 2019.

\bibitem[Cover \& Thomas(2012)Cover and Thomas]{cover2012elements}
Thomas~M Cover and Joy~A Thomas.
\newblock \emph{Elements of information theory}.
\newblock John Wiley \& Sons, 2012.

\bibitem[Douglas \& Martin(2007)Douglas and Martin]{douglas2007recurrent}
Rodney~J Douglas and Kevan~AC Martin.
\newblock Recurrent neuronal circuits in the neocortex.
\newblock \emph{Current biology}, 17\penalty0 (13):\penalty0 R496--R500, 2007.

\bibitem[Ghahramani(1997)]{ghahramani1997learning}
Zoubin Ghahramani.
\newblock Learning dynamic bayesian networks.
\newblock In \emph{International School on Neural Networks, Initiated by IIASS
  and EMFCSC}, pp.\  168--197. Springer, 1997.

\bibitem[Koster et~al.(1996)]{koster1996markov}
Jan~TA Koster et~al.
\newblock Markov properties of nonrecursive causal models.
\newblock \emph{The Annals of Statistics}, 24\penalty0 (5):\penalty0
  2148--2177, 1996.

\bibitem[Lauritzen et~al.(1990)Lauritzen, Dawid, Larsen, and
  Leimer]{lauritzen1990independence}
Steffen~L Lauritzen, A~Philip Dawid, Birgitte~N Larsen, and H-G Leimer.
\newblock Independence properties of directed markov fields.
\newblock \emph{Networks}, 20\penalty0 (5):\penalty0 491--505, 1990.

\bibitem[Moe(1985)]{moe1985control}
Terry~M Moe.
\newblock Control and feedback in economic regulation: The case of the nlrb.
\newblock \emph{American Political Science Review}, 79\penalty0 (4):\penalty0
  1094--1116, 1985.

\bibitem[Pearl(2009)]{pearl2009causality}
Judea Pearl.
\newblock \emph{Causality}.
\newblock Cambridge university press, 2009.

\bibitem[Pearl \& Dechter(2013)Pearl and Dechter]{pearl2013identifying}
Judea Pearl and Rina Dechter.
\newblock Identifying independencies in causal graphs with feedback.
\newblock \emph{arXiv preprint arXiv:1302.3595}, 2013.

\bibitem[Peters et~al.(2017)Peters, Janzing, and
  Sch{\"o}lkopf]{peters2017elements}
Jonas Peters, Dominik Janzing, and Bernhard Sch{\"o}lkopf.
\newblock \emph{Elements of causal inference: foundations and learning
  algorithms}.
\newblock MIT press, 2017.

\bibitem[Richardson(1997)]{richardson1997characterization}
Thomas Richardson.
\newblock A characterization of markov equivalence for directed cyclic graphs.
\newblock \emph{International Journal of Approximate Reasoning}, 17\penalty0
  (2-3):\penalty0 107--162, 1997.

\bibitem[Sch{\"o}lkopf et~al.(2012)Sch{\"o}lkopf, Janzing, Peters, Sgouritsa,
  Zhang, and Mooij]{scholkopf2012causal}
Bernhard Sch{\"o}lkopf, Dominik Janzing, Jonas Peters, Eleni Sgouritsa, Kun
  Zhang, and Joris Mooij.
\newblock On causal and anticausal learning.
\newblock \emph{arXiv preprint arXiv:1206.6471}, 2012.

\bibitem[Spirtes(1994)]{spirtes1994conditional}
Peter Spirtes.
\newblock \emph{Conditional independence in directed cyclic graphical models
  for feedback}.
\newblock Carnegie Mellon [Department of Philosophy], 1994.

\bibitem[Yeung(1991)]{yeung1991new}
Raymond~W Yeung.
\newblock A new outlook on shannon's information measures.
\newblock \emph{IEEE transactions on information theory}, 37\penalty0
  (3):\penalty0 466--474, 1991.

\end{thebibliography}
\bibliographystyle{iclr2020_conference}
\vfill
\pagebreak
\appendix

\section{Proof of Theorem \ref{thm:nvarindp}}
For intuition, see the proof sketch appearing in the main paper.
We start by stating and proving the following lemmas.

\begin{restatable}{lemma}{sum_cond_ent}
\label{lem:sum_cond_ent}
$H(X_1, X_2, \ldots , X_n) = H(X_1|X_2) + H(X_2|X_3) + \ldots + H(X_{n-1}|X_n) + H(X_n|X_1)$
\end{restatable}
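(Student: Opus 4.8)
The plan is to generalize the entropy-manipulation argument used in the first part of the proof of Theorem~\ref{thm:two_var_indp} from two variables to $n$ variables. The starting point is the definition of joint entropy,
\begin{align*}
H(X_1, X_2, \ldots, X_n) &= -\sum_{x_1, \ldots, x_n} p(x_1, \ldots, x_n) \log p(x_1, \ldots, x_n),
\end{align*}
into which I would substitute the hypothesized cyclic factorization $p(x_1, \ldots, x_n) = p(x_1|x_2)p(x_2|x_3)\cdots p(x_{n-1}|x_n)p(x_n|x_1)$ in the logarithm. Using the identity $\log \prod_i a_i = \sum_i \log a_i$, the single logarithm of the product breaks into a sum of $n$ logarithms of individual conditional factors, and by linearity the whole expression splits into $n$ separate sums, one for each factor $p(x_i|x_{i+1})$ (with the cyclic convention that $x_{n+1} = x_1$, so the last factor is $p(x_n|x_1)$).

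The key step is then to recognize each of these $n$ sums as a pairwise conditional entropy. The $i$-th sum has the form $-\sum_{x_1, \ldots, x_n} p(x_1, \ldots, x_n) \log p(x_i | x_{i+1})$, where the summand depends on only the two variables $x_i$ and $x_{i+1}$. First I would sum the joint distribution over all variables other than $x_i$ and $x_{i+1}$; because $p(x_1, \ldots, x_n)$ is a valid joint distribution, this marginalization collapses it to $p(x_i, x_{i+1})$, leaving exactly $-\sum_{x_i, x_{i+1}} p(x_i, x_{i+1}) \log p(x_i|x_{i+1}) = H(X_i | X_{i+1})$, the standard definition of conditional entropy. Applying this to each of the $n$ terms yields $H(X_1|X_2) + H(X_2|X_3) + \cdots + H(X_{n-1}|X_n) + H(X_n|X_1)$, which is the claimed identity.

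I expect the main obstacle to be purely the bookkeeping of the marginalization rather than any conceptual difficulty: one must verify carefully that summing the full joint over the $n-2$ irrelevant variables reproduces the correct pairwise marginal $p(x_i, x_{i+1})$ for \emph{every} index $i$, including the wrap-around term $H(X_n|X_1)$, and that the cyclic indexing is handled consistently. Since the factorization is assumed rather than derived, no positivity or independence assumptions are needed here; the identity is a direct algebraic consequence of the factorization together with the definitions of joint and conditional entropy. This lemma is the $n$-variable analogue of the intermediate equality $H(X,Y) = H(X|Y) + H(Y|X)$ established inside the proof of Theorem~\ref{thm:two_var_indp}, and it supplies the first of the two representations of the joint entropy that the proof of Theorem~\ref{thm:nvarindp} subsequently compares.
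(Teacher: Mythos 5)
Your proposal is correct and matches the paper's own proof essentially step for step: substitute the cyclic factorization into the logarithm in the definition of joint entropy, split the log of the product into a sum of $n$ logs, and identify each resulting sum with a conditional entropy $H(X_i|X_{i \oplus 1})$. The only difference is that you spell out the marginalization of the joint over the $n-2$ irrelevant variables, a step the paper leaves implicit, which is a harmless (indeed welcome) addition of detail.
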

\begin{proof}
\begin{align*}
H&(X_1, X_2, \ldots, X_n) = - \sum_{x1, \ldots, x_n} p(x_1, \ldots, x_n) \log p(x_1, x_2, \ldots, x_n) \\
&= - \sum_{x1, \ldots, x_n} p(x_1, \ldots, x_n) \log p(x_1|x_2)p(x_2|x_3)\ldots p(x_n|x_1) \\
&= - \sum_{x1, \ldots, x_n} p(x_1, \ldots, x_n) \log p(x_1|x_2) - \ldots - \sum_{x1, \ldots, x_n} p(x_1, \ldots, x_n) \log p(x_n|x_1) \\
&= H(X_1|X_2) + H(X_2|X_3) + \ldots + H(X_n|X_1)
\end{align*}
\end{proof}

\begin{restatable}{lemma}{cond_plus_mi}
\label{lem:cond_plus_mi}
$H(X_1, X_2, ... , X_n) = \\ \mu^*\bigg((X_1-X_2) \cup (X_2 - X_3) \cup \ldots \cup (X_{n-1} - X_n) \cup (X_n - X_1) \bigg) + I(X_1; X_2; \ldots; X_n)$
\end{restatable}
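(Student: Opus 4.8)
The plan is to work entirely within Yeung's signed-measure formulation introduced in the proof sketch, where each variable $X_i$ is identified with a set $\tilde{X}_i$ in the information diagram and $\mu^*$ is the unique signed measure consistent with Shannon's entropy and mutual information. Under this correspondence the joint entropy is the measure of the union, $H(X_1, \ldots, X_n) = \mu^*\big(\bigcup_{i=1}^n \tilde{X}_i\big)$, and the joint mutual information is the measure of the full intersection, $I(X_1; \ldots; X_n) = \mu^*\big(\bigcap_{i=1}^n \tilde{X}_i\big)$. Writing $U = \bigcup_i \tilde{X}_i$, $C = \bigcap_i \tilde{X}_i$, and $D = \bigcup_{i=1}^n (\tilde{X}_i - \tilde{X}_{i+1})$ with cyclic indices so that $\tilde{X}_{n+1} = \tilde{X}_1$, the lemma is precisely the additivity statement $\mu^*(U) = \mu^*(D) + \mu^*(C)$. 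So the entire proof reduces to showing that $C$ and $D$ form a disjoint partition of $U$, after which finite additivity of the signed measure finishes the job.

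First I would argue the partition at the level of atoms of the diagram. Fix an atom and let $S \subseteq \{1, \ldots, n\}$ be the set of indices $i$ with the atom in $\tilde{X}_i$; the atom lies in $U$ exactly when $S \neq \emptyset$. I split into two cases. If $S = \{1, \ldots, n\}$ the atom lies in every set, hence in $C$, and it can lie in no difference $\tilde{X}_i - \tilde{X}_{i+1}$, hence not in $D$. If $S$ is a nonempty proper subset, the atom is not in $C$, and I claim it must lie in some difference term, so that it is in $D$.

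The claim rests on a cyclic-boundary argument: the successor map $i \mapsto i+1$ (taken cyclically, with $n+1 \equiv 1$) is a single $n$-cycle, so the only subsets of $\{1, \ldots, n\}$ closed under it are $\emptyset$ and the whole set. Since $S$ is nonempty and proper it is not closed, so there exists $i \in S$ with $i+1 \notin S$; for that $i$ the atom lies in $\tilde{X}_i - \tilde{X}_{i+1} \subseteq D$. Together the two cases give $U = C \sqcup D$ with $C \cap D = \emptyset$, whereupon $\mu^*(U) = \mu^*(C) + \mu^*(D)$ holds by additivity of the signed measure, and substituting the entropy and mutual-information identities yields the stated lemma.

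The main obstacle is this cyclic-boundary step: one must be sure that every atom touching a proper nonempty collection of the variables necessarily crosses at least one edge $\tilde{X}_i \to \tilde{X}_{i+1}$, which is exactly where the cyclic (rather than merely linear) indexing is essential — a linear chain would leave a boundary atom uncovered at one end. The remainder is bookkeeping: confirming that a signed measure is additive on disjoint sets, and checking that the difference terms overlapping one another causes no trouble, since $D$ is only ever used as a single union and we need nothing finer than $C \cap D = \emptyset$ and $C \cup D = U$.
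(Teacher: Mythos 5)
Your proof is correct and takes essentially the same route as the paper's: both decompose the union $X_1 \cup X_2 \cup \cdots \cup X_n$ as the disjoint union of $(X_1 - X_2) \cup \cdots \cup (X_n - X_1)$ and the full intersection $X_1 \cap \cdots \cap X_n$, then apply additivity of the signed measure $\mu^*$ together with the identifications $H(X_1,\ldots,X_n) = \mu^*\big(\bigcup_i X_i\big)$ and $I(X_1;\ldots;X_n) = \mu^*\big(\bigcap_i X_i\big)$. The only difference is that you actually justify the partition with an explicit atom-level, cyclic-closure argument, whereas the paper merely asserts the disjointness and the set identity.
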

\begin{proof}
We note that $X_1 \cap X_2 \cap \ldots \cap X_n$ is disjoint from $(X_1 - X_2) \cup (X_2 - X_3) \cup \ldots \cup (X_{n-1} - X_n) \cup (X_n - X_1)$.

That allows us to write \\
$X_1 \cup X_2 \cup \ldots \cup X_n = \big((X_1 - X_2) \cup (X_2 - X_3) \cup \ldots \cup (X_{n-1} - X_n) \cup (X_n - X_1)\big) \cup \big(X_1 \cap X_2 \cap \ldots \cap X_n\big)$.

We apply the information measure to both sides of this equation, because applying a measure to equal sets will give equal measures.
Therefore, we have
$H(X_1, X_2, \ldots , X_n) = \mu^*\big((X_1 - X_2) \cup (X_2 - X_3) \cup \ldots \cup (X_{n-1} - X_n) \cup (X_n - X_1)\big) + I(X_1; X_2; \ldots ; X_n)$.

\end{proof}

\begin{restatable}{lemma}{adj_disjoint}
\label{lem:adj_disjoint}
\text{Adjacent set expressions are disjoint:} If $k=j \oplus 1$ or $j=k \oplus 1$, then $\mu^*(\bigcap\limits_{i \in W}(X_i - X_{i \oplus 1})) = 0$
\end{restatable}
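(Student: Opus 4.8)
The plan is to show that the set whose measure we are computing is literally empty, after which the conclusion is immediate. In the signed-measure framework of \cite{yeung1991new} underlying Lemma \ref{lem:cond_plus_mi}, each symbol $X_i$ doubles as a subset of the universe, and the set difference $X_i - X_{i \oplus 1}$ consists of exactly those points lying in $X_i$ but outside $X_{i \oplus 1}$. I would open by recalling this identification, since it recasts the claimed measure identity as a purely set-theoretic disjointness statement.

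Next I would reduce to the essential configuration. Without loss of generality take $k = j \oplus 1$ (the case $j = k \oplus 1$ follows by interchanging the roles of $j$ and $k$), so that both the index $j$ and the index $j \oplus 1$ belong to $W$. Because the intersection $\bigcap_{i \in W}(X_i - X_{i \oplus 1})$ ranges over all of $W$, it is contained in the intersection of just these two factors, $(X_j - X_{j \oplus 1}) \cap (X_{j \oplus 1} - X_{j \oplus 2})$.

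The crux is then a one-line contradiction on the shared middle variable $X_{j \oplus 1}$: every point of $X_j - X_{j \oplus 1}$ lies outside $X_{j \oplus 1}$, whereas every point of the adjacent term $X_{j \oplus 1} - X_{j \oplus 2}$ lies inside $X_{j \oplus 1}$. The two factors are therefore disjoint, so their intersection is empty, and hence so is the larger intersection $\bigcap_{i \in W}(X_i - X_{i \oplus 1})$ that it contains.

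Finally, since $\mu^*$ is a signed measure it assigns the value zero to the empty set, which yields $\mu^*\big(\bigcap_{i \in W}(X_i - X_{i \oplus 1})\big) = 0$. I do not anticipate a genuine obstacle: once the I-measure identification is in place the argument is elementary. The only points warranting care are the cyclic bookkeeping of $\oplus$ (so that $j \oplus 2$ wraps around correctly when $j$ is near $n$) and the remark that the signedness of $\mu^*$ is immaterial here, since every measure vanishes on the empty set. In particular, this adjacency case needs no appeal to d-separation, which is reserved for the non-adjacent terms handled in the following lemma.
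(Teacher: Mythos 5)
Your proposal is correct and matches the paper's own proof essentially step for step: both reduce the full intersection to the two adjacent factors $(X_j - X_{j \oplus 1}) \cap (X_{j \oplus 1} - X_{j \oplus 2})$, observe that one factor lies outside $X_{j \oplus 1}$ while the other lies inside it (the paper writes this as $X_j \cap X_k^c \cap X_k \cap X_{k \oplus 1}^c = \emptyset$), and conclude $\mu^*(\emptyset) = 0$. Your explicit remarks about the WLOG symmetry and the irrelevance of signedness on the empty set are minor presentational additions, not a different argument.
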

\begin{proof}
First, $j,k \in W$ implies
\begin{align*}
\bigcap_{i \in W}(X_i - X_{i \oplus 1}) &\subseteq (X_j - X_{j \oplus 1}) \cap (X_k - X_{k \oplus 1})\\
\text{And, for } k = j \oplus 1 \\
(X_j - X_{j \oplus 1}) \cap (X_k - X_{k \oplus 1}) &= (X_j - X_k) \cap (X_k - X_{k \oplus 1}) \\
&= X_j \cap X_k^c \cap X_k \cap X_{k \oplus 1}^c \\
&= \emptyset \\
\text{with a symmetric argument holding for} j = k \oplus 1. \\
\text{Therefore,} \\
\bigcap_{i \in W}(X_i - X_{i \oplus 1}) &\subseteq \emptyset \text{, so} \\
\bigcap_{i \in W}(X_i - X_{i \oplus 1}) &= \emptyset \text{, so} \\
\mu^*(\bigcap_{i \in W}(X_i - X_{i \oplus 1})) &= \mu^*(\emptyset) = 0.
\end{align*}
\end{proof}

\begin{restatable}{lemma}{nonadj_dsep}
\text{Non-adjacent set expressions are d-separated:} If $k \neq j \oplus 1$ and $j \neq k \oplus 1$, then $\mu^*(\bigcap\limits_{i \in W}(X_i - X_{i \oplus 1}) = 0$.
\label{lem:nonadj_dsep}
\begin{proof}
If $k \neq j \oplus 1$ and $j \neq k \oplus 1$, it implies that $j$, $k$, $j \oplus 1$, and $k \oplus 1$ are four distinct values, corresponding to four distinct variables in the cycle.

We are interested in whether $X_j$ and $X_k$ are d-separated by $X_{j \oplus 1}$, $X_{k \oplus 1}$.

There are exactly two paths between $X_j$ and $X_k$, neither path containing a collider. $X_{j \oplus 1}$ is on the opposite path from $X_{k \oplus 1}$. Therefore, conditioning on $X_{j \oplus 1}$ and $X_{k \oplus 1}$ will d-separate $X_j$ and $X_k$.

More formally, this d-separation condition applies to
$\mu^*\big(\bigcap\limits_{a \in A} a - \bigcap\limits_{b \in B} b \big) = 0$ for any $A$, $B$ such that $\{X_j, X_k\} \subseteq A$ and $\{X_{j \oplus 1}, X_{k \oplus 1} \} \subset B$.

We can rewrite
$\mu^*\big(\bigcap\limits_{i \in W}(X_i - X_{i \oplus 1}\big) = \mu^*\big(\bigcap\limits_{i \in W} X_i - \bigcap\limits_{i \in W} X_{i \oplus 1} \big)$, where $\{X_j, X_k\} \subseteq \bigcap\limits_{i \in W} X_i$ and $\{X_{j \oplus 1}, X_{k \oplus 1}\} \subseteq \bigcap\limits_{i \in W} X_{i \oplus 1}$.

That allows us to apply the d-separation condition, giving
$\mu^*\big(\bigcap\limits_{i \in W}(X_i - X_{i \oplus 1} \big) = 0$.

\end{proof}
\end{restatable}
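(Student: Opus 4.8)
The plan is to show that the set expression $\bigcap_{i \in W}(X_i - X_{i \oplus 1})$ has zero information measure by identifying it with a conditional mutual information that vanishes by d-separation. First I would unfold the set differences: since $X_i - X_{i \oplus 1} = X_i \cap X_{i \oplus 1}^c$, intersecting over $i \in W$ gives $\bigcap_{i \in W}(X_i - X_{i \oplus 1}) = \big(\bigcap_{i \in W} X_i\big) - \big(\bigcup_{i \in W} X_{i \oplus 1}\big)$, i.e. the region lying inside every $X_i$ with $i \in W$ and outside every $X_{i \oplus 1}$. Writing $L = \{i \oplus 1 : i \in W\}$, the Yeung correspondence between $\mu^*$ and Shannon measures identifies $\mu^*$ of this region with a conditional mutual information among $\{X_i\}_{i \in W}$ given $\{X_\ell\}_{\ell \in L}$. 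If some index lies in both $W$ and $L$ the region is empty and the measure is trivially $0$, so I may assume $W$ and $L$ are disjoint.

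Second, and this is the heart of the argument, I would establish that $X_j$ and $X_k$ are d-separated by the conditioning set. Because the graph is a single directed cycle, there are exactly two paths between the non-adjacent vertices $X_j$ and $X_k$, namely the two arcs of the cycle, and neither arc contains a collider. The successor $X_{j \oplus 1}$ lies on one arc and $X_{k \oplus 1}$ on the other; here non-adjacency ($k \neq j \oplus 1$ and $j \neq k \oplus 1$) is exactly what guarantees that $j, k, j \oplus 1, k \oplus 1$ are four distinct vertices, so that each successor is an interior, non-collider vertex of its arc. Conditioning on a non-collider blocks the path it lies on, so conditioning on $X_{j \oplus 1}$ and $X_{k \oplus 1}$ blocks both arcs, and conditioning on the remaining vertices of $L$ only adds further non-colliders, which cannot open either arc. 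Hence $X_j$ and $X_k$ are d-separated given $\{X_\ell\}_{\ell \in L}$.

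Third, I would invoke \citet{pearl2013identifying}, which guarantees that d-separation is sound for discrete directed cyclic graphs (the same result relied on in Theorem~\ref{thm:nvarindp}), to conclude the conditional independence $X_j \perp X_k \mid \{X_\ell\}_{\ell \in L}$, and therefore that the corresponding conditional mutual information is zero. Translating back through the $\mu^*$ correspondence then gives $\mu^*\big(\bigcap_{i \in W}(X_i - X_{i \oplus 1})\big) = 0$, which matches the structure of Lemma~\ref{lem:adj_disjoint} and completes the enumeration of higher-order terms.

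The step I expect to be the main obstacle is this last transfer from conditional independence to the vanishing of the specific term. Because $\mu^*$ is a \emph{signed} measure, the fact that one region has measure zero does not by itself force a sub-region to have measure zero, so I cannot simply bound the target term by the pairwise quantity $I(X_j; X_k \mid \ldots)$ and declare victory. The careful version must argue \emph{directly}, via the consistency of $\mu^*$ with Shannon's definitions, that the measure of the atom $\big(\bigcap_{i \in W} X_i\big) - \bigcup_{i \in W} X_{i \oplus 1}$ equals a conditional mutual information that the established d-separation pins to zero, while also dispatching the degenerate case where $W$ and $L$ intersect and keeping track of the bookkeeping when $|W| \ge 3$.
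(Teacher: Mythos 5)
Your proposal follows the paper's own route: unfold the set differences, identify the term with a conditional information via Yeung's $\mu^*$ correspondence, d-separate $X_j$ and $X_k$ using the two successors sitting on opposite arcs of the cycle, and invoke \cite{pearl2013identifying} for soundness of d-separation on discrete cyclic graphs. In two respects you are actually more careful than the paper: your De Morgan unfolding correctly subtracts the \emph{union} $\bigcup_{i \in W} X_{i \oplus 1}$ (the paper writes an intersection, $\bigcap_{b \in B} b$ and $\bigcap_{i \in W} X_{i \oplus 1}$, in both its ``d-separation condition'' and its rewriting step, which is a slip), and you explicitly dispatch the degenerate case $W \cap L \neq \emptyset$, where the region is literally empty (overlapping with Lemma~\ref{lem:adj_disjoint}).

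However, you stop exactly where the paper is weakest, and you leave that step open rather than closing it: as you correctly observe, $\mu^*$ is a signed measure, so $I(X_j; X_k \mid X_L) = 0$ does not by itself force the sub-region $\bigcap_{i \in W} X_i - \bigcup_{\ell \in L} X_\ell$ to have measure zero; the paper glosses over this by asserting, without proof, a blanket condition for all $A \supseteq \{X_j, X_k\}$ and $B \supseteq \{X_{j\oplus 1}, X_{k\oplus 1}\}$. The standard way to finish, which your sketch needs: by the $\mu^*$ correspondence the term equals the conditional interaction information $I(X_{i_1}; \ldots; X_{i_m} \mid X_L)$ with $W = \{i_1, \ldots, i_m\}$, and the recursion $I(X_1; \cdots; X_m \mid C) = I(X_1; \cdots; X_{m-1} \mid C) - I(X_1; \cdots; X_{m-1} \mid X_m, C)$ unrolls it into the alternating sum $\sum_{T \subseteq W \setminus \{j,k\}} (-1)^{|T|} \, I(X_j; X_k \mid X_T, X_L)$ of \emph{pairwise} conditional mutual informations. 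Now use the structural fact you already noted: every vertex of the single cycle has in-degree one, so no vertex is a collider on any path, hence enlarging the conditioning set can never open a path; each set $T \cup L$ still contains the interior non-colliders $X_{j \oplus 1}$ and $X_{k \oplus 1}$ on the two arcs, and $j, k \notin T \cup L$ by your disjointness reduction. Each pairwise term is therefore zero by \cite{pearl2013identifying} (conditional mutual information vanishes iff conditional independence holds for discrete variables), so the whole sum vanishes. With that paragraph added, your argument is complete, and in fact more rigorous than the paper's own proof of this lemma.
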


Now we can give the proof of Theorem~\ref{thm:nvarindp}, which we restate for convenience:

\nvarindp*

\begin{proof}
We consider a probability distribution P that factorizes according to $p(x_1, x_2, \ldots , x_n) = p(x_1|x_2)p(x_2|x_3) \ldots p(x_{n-1}|x_n)p(x_n|x_1)$. We will write the joint entropy in two forms, in Lemma 1 and Lemma 2.

It remains to show that $\mu^*\big((X_1-X_2) \cup (X_2 - X_3) \cup \ldots \cup (X_{n-1} - X_n) \cup (X_n - X_1) \big) = H(X_1|X_2) + H(X_2|X_3) + \ldots + H(X_{n-1}|X_n) + H(X_n|X_1)$, which will allow us to conclude that $I(X_1; X_2; \ldots; X_n)= 0$.

To do this, we will apply the inclusion-exclusion formula to $\mu^*\big((X_1-X_2) \cup (X_2 - X_3) \cup \ldots \cup (X_{n-1} - X_n) \cup (X_n - X_1) \big)$, which we can do because it is a measure on a union of sets.

Informally, the inclusion-exclusion formula will produce many terms. The terms containing one set difference, correspond to the conditional entropy terms. The higher-order terms containing an intersection of at least two set differences, will fall into one of two cases, but in both cases the term will equal zero. The first case we handle in Lemma 3 and the second case we handle in Lemma 4.

To be more formal, let $\mathbb{W}$ be the power set on $\{i \in \mathbb{N}, i < n\}$. The inclusion-exclusion formula gives us a sum of terms, where each term is  $\mu^*(\bigcap\limits_{i \in W}(X_i - X_{i \oplus 1}))$ for some $W \in \mathbb{W}$. We can ignore the negative signs on higher-order terms because we will soon show these terms are all zero.

The higher-order terms are those where $|W| \geq 2$. For these terms, we consider $\mu^*(\bigcap\limits_{i \in W}(X_i - X_{i \oplus 1}))$ where $\oplus$ indicates addition mod n. Since $|W| \geq 2$, without loss generality we pick any two distinct elements of $W$ and call them $j$,$k$. If
$k=j \oplus 1$ or $j=k \oplus 1$, we apply Lemma 3 to show the term is zero. If $k \neq j \oplus 1$ and $j \neq k \oplus 1$, we apply Lemma 4 to show the term is zero.

We have now shown that all the terms with $|W| \geq 2$ are zero, which leaves only the terms with $|W| = 1$, giving $\mu^*\big((X_1-X_2) \cup (X_2 - X_3) \cup \ldots \cup (X_{n-1} - X_n) \cup (X_n - X_1) \big) = H(X_1|X_2) + H(X_2|X_3) + \ldots + H(X_{n-1}|X_n) + H(X_n|X_1)$. Applying Lemma 1 and Lemma 2 leaves us with $I(X_1; X_2; \ldots; X_n)= 0$, as desired.
\end{proof}

\end{document}